%
\documentclass[runningheads]{llncs}
\usepackage{graphicx}
\usepackage{amsmath}
\usepackage{comment}
\usepackage{amssymb}
\usepackage{soul}
\usepackage{xcolor}
\usepackage[T1]{fontenc}

%
\usepackage{graphicx}
%
%
\begin{document}
\title{Bi-View Embedding Fusion: A Hybrid Learning Approach for Knowledge Graph's Nodes Classification Addressing Problems\\with Limited Data}
%
%
\author{Rosario Napoli\inst{1}\orcidID{0009-0006-2760-9889}
\and Giovanni Lonia\inst{1}\orcidID{0009-0000-1824-0123}
\and Antonio Celesti \inst{1}\orcidID{0000-0001-9003-6194} \and Massimo Villari \inst{1}\orcidID{0000-0001-9457-0677} \and Maria Fazio \inst{1}\orcidID{0000-0003-3574-1848} }
\authorrunning{F. Author et al.}
%

%
\authorrunning{R. Napoli}
%
\institute{Department of Mathematical and Computer Sciences, Physical Sciences and Earth Sciences, University of Messina, Viale Ferdinando Stagno d'Alcontres 31 98166, Messina, Italy.
}
\maketitle              
\begin{abstract}
Traditional Machine Learning (ML) methods require large amounts of data to perform well, limiting their applicability in sparse or incomplete scenarios and forcing the usage of additional synthetic data to improve the model training. To overcome this challenge, the research community is looking more and more at Graph Machine Learning (GML) as it offers a powerful alternative by using relationships within data. However, this method also faces limitations, particularly when dealing with Knowledge Graphs (KGs), which can hide huge information due to their semantic nature.
This study introduces Bi-View, a novel hybrid approach that increases the informative content of node features in KGs to generate enhanced Graph Embeddings (GEs) that are used to improve GML models without relying on additional synthetic data. The proposed work combines two complementary GE techniques: Node2Vec, which captures structural patterns through unsupervised random walks, and GraphSAGE, which aggregates neighbourhood information in a supervised way. 
Node2Vec embeddings are first computed to represent the graph topology, and node features are then enriched with centrality-based metrics, which are used as input for the GraphSAGE model. Moreover, a fusion layer combines the original Node2Vec embeddings with the GraphSAGE-influenced representations, resulting in a dual-perspective embedding space. Such a fusion captures both topological and semantic properties of the graph, enabling the model to exploit informative features that may exist in the dataset but that are not explicitly represented. Our approach improves downstream task performance, especially in scenarios with poor initial features, giving the basis for more accurate and precise KG-enanched GML models.

\end{abstract}

\section{Introduction}

In recent years, Machine Learning (ML) has become the dominant approach for solving a wide range of problems across various domains. However, ML methods operate under the assumption that data points are independent and require extensive data to perform well \cite{b72}. These constraints introduce significant limitations, especially in scenarios where data and features are incomplete or expensive to obtain, forcing data scientists to use different techniques, such as increasing the dataset with additional synthetic data, to improve models' training at the cost of using artificial information. To address these limitations, Graph Machine Learning (GML) has emerged as a powerful method for extracting meaningful insights using the inherent relationships in data, representing them in a graph-based format where nodes are entities and edges are relationships. GML models use node features for probabilistic or aggregation strategies \cite{b92} to create Graph Embeddings (GEs), which represent features in continuous vector spaces that are given as input to ML models. 

At the same time, the availability and semantics of interconnected data have increased the widespread adoption of Knowledge Graphs (KGs), which extend basic graph structures by allowing elements to have different labels and properties \cite{b12}, providing a powerful and intuitive way to describe real-world dynamics. This means that KGs-based GML models can capture both structural and semantic information for node classification and link prediction tasks \cite{b53}. However, even with good results, poor datasets are still complex to be analysed for GML tasks, particularly within KGs, where information can be hidden because of their semantic nature. 

To overcome this critical limitation and make the most of the available features without using additional synthetic data, we propose a new hybrid approach that combines unsupervised probabilistic topological embeddings from Node2Vec and centrality metrics with supervised message-passing via GraphSAGE. In particular, Node2Vec is employed to capture global structural patterns via random walks that are combined with classical graph-theoretic centrality measures to create an initial embedding vector. GraphSAGE is then applied using the initial embeddings to generate a new set of GEs that incorporate both local neighbourhood aggregation and the previously derived global structural information. The key innovation in our approach lies in the introduction of a learnable fusion mechanism that adaptively weights the unsupervised (Node2Vec) and supervised (GraphSAGE+Centrality Measures) representations for each node, generating new embeddings that can capture both topological properties and task-specific semantic patterns. This dual integration strategy improves node classification in poor datasets and enables a deeper analysis of the available hidden knowledge within KGs. Our experiments 
confirmed the significant effectiveness of this approach, demonstrating consistent improvements over standalone node classification baselines.

The remainder of this paper is organised as follows. Section \ref{sec:definition} explains the basic concepts on which our scientific work is based. Section \ref{sec:related_work} discusses the related works in the context of node classification on KGs. Section \ref{sec:bi-view_strategy} introduces our Bi-View hybrid embedding strategy. Sections \ref{sec:model_validation} and \ref{sec:sp_model} validate the structural enrichment process and present the supervised fusion model. Sections \ref{sec:experiments} and \ref{sec:setup_and_model_validation} describe the experimental setup and evaluation results. Finally, Section \ref{sec:conclusion} concludes the paper and outlines future research directions.

\section{Definitions}
\label{sec:definition}
This Section introduces the formal definitions of the core concepts used throughout this study.

\begin{definition}[Knowledge Graph]
A Knowledge Graph is defined as a tuple \( KG = (V, R, E, \ell_V) \), where:
\begin{itemize}
    \item \( V \) is the set of entities (nodes);
    \item \( R \) is the set of relationship types;
    \item \( E \subseteq V \times R \times V \) is the set of labelled, directed edges (triplets);
    \item \( \ell_V : V \rightarrow 2^C \) is a labelling function that assigns one or more class labels from the set \( C \) to each node.
\end{itemize}

\end{definition}

\begin{definition}[Graph Embedding]
Let \( KG = (V, E, R,  \ell_V) \) be a Knowledge Graph. A Graph Embedding (GE) is a function
\[
f: V \rightarrow \mathbb{R}^d,
\]
where \( d \ll |V| \), which maps each node \( v \in V \) to a vector \( f(v) \in \mathbb{R}^d \).
\end{definition}

\begin{definition}[Transition probability]
Given a Knowledge Graph \( KG = (V, R, E, \ell_V) \), let \( c = (v_1, v_2, \ldots, v_k) \) be a path. The transition probability of moving from node \( v = c_{i-1} \) to a neighbor \( x = c_i \) is defined as
\begin{equation}
P(c_i = x \mid c_{i-1} = v) = \pi_{xv} = \frac{\alpha_{pq}(t,x) \cdot w_{vx}}{Z},
\end{equation}
\noindent
where \(w_{vx}\) is the cumulative weight of the edges between nodes \(v\) and \(x\), \(Z\) is a normalization constant and \(\alpha_{pq}(t,x)\) is a search bias defined as
\[
\alpha_{pq}(t,x) =
\begin{cases}
\frac{1}{p} & \text{if } d_{tx} = 0 \ , \\
1           & \text{if } d_{tx} = 1 \,, \\
\frac{1}{q} & \text{if } d_{tx} = 2 \,
\end{cases}
\]
\noindent
where \(d_{tx}\) is the shortest path distance between node \(t\) (the node visited before \(v\)) and node \(x\), \(p\) is the return parameter and \(q\) is the in-out parameter.
\end{definition}
\begin{definition}[Node2Vec on Knowledge Graphs as Unsupervised Aggregator]
Given a Knowledge Graph \( KG = (V, R, E, \ell_V) \), Node2Vec \cite{b78} is a method that learns low-dimensional vector embeddings \( \phi : V \rightarrow \mathbb{R}^d \) for nodes by simulating biased random walks over the graph structure.
A sequence of nodes is generated via random walks defined by the transition probability:
\begin{equation}
 P(c_{i} = x \mid c_{i-1} = v) =
\begin{cases}
\frac{1}{p} & \text{if } x = c_{i-2} \\
1           & \text{if } d(x, c_{i-2}) = 1 \\
\frac{1}{q} & \text{otherwise}
\end{cases}   
\end{equation}
where:
1) \( c_i \) is the node \( c \) at position \( i \) in the walk; 
2) \( p \) is the return parameter controlling the likelihood of revisiting the previous node; 
3) \( q \) is the in-out parameter controlling the likelihood of exploring further away nodes; 
4) \( d(x, c_{i-2}) \) is the shortest path distance between node \( x \) and the node before the current one.

\end{definition}

\begin{definition}[GraphSAGE on Knowledge Graphs as Supervised Aggregator]
Given a Knowledge Graph \( KG = (V, R, E, \ell_V) \), GraphSAGE (Graph Sample and Aggregate) is an inductive framework that learns
embeddings
by aggregating information from the local neighbourhood of each node \cite{b77}.
At each layer \( k \), the representation of a node \( v \in V \), denoted as \( h_v^{(k)} \in \mathbb{R}^d \), is updated as
\begin{equation}
h_v^{(k)} = \sigma\Big( W^{(k)} \cdot 
\text{AGG}^{(k)}\big( \{ h_u^{(k-1)} \mid (u, r, v) \in E \} 
\cup \{ h_v^{(k-1)} \} \big) \Big)
\end{equation}
where:
1) \( h_v^{(0)} \) is the initial feature vector of node \( v \); 
2) \( \text{AGG}^{(k)} \) is a permutation-invariant aggregator function; 
3) \( W^{(k)} \) is a trainable weight matrix at layer \( k \); 
4) \( \sigma \) is a non-linear activation function.
\end{definition}

\section{Related Works on Classification in KGs Datasets} \label{sec:related_work}

In recent years, as relational datasets have become increasingly prevalent across domains such as social networks and biological systems~\cite{b79}, the need to extract meaningful insights from limited data has become mandatory~\cite{b94}. In this context, node classification has emerged as a core ML task that aims to assign labels to nodes based on structural and semantic information. However, challenges increase when data are highly semantic, as graph-based learning models heavily depend on explicated features to perform well. In such settings, using both intrinsic topology and node-level features is essential to increase prediction accuracy. This issue is particularly evident in KGs, where, despite the apparent availability of information, a high degree of semantic may lead to unexpressed knowledge and underconnected graph structures ~\cite{b97}.
To address this, Graph Embeddings (GEs) have emerged as a fundamental tool in the node classification pipeline~\cite{b74}. By translating both the structural and semantic properties of graphs into low-dimensional vector spaces, GEs enable the application of conventional ML algorithms to complex graph data~\cite{b104} using random walk-based methods and neural network-based methods~\cite{b103}.
However, when the semantics are not fully explicated, models rely only on either the graph structure or node attributes, tending to underperform~\cite{b101}.
As a consequence, the field remains relatively new, and determining how to enhance data expressiveness for downstream node classification tasks remains an important challenge. Ongoing research in GML has investigated multiple strategies aimed at improving the quality and expressiveness of embeddings.

Recent efforts have led to the development of different GEs learning approaches designed to bridge the gap between sparse datasets and the rich structural information inherent in KGs. Single-view embeddings learning like unsupervised models, such as Node2Vec, extract latent topological information using stochastic walks, while supervised methods, such as GraphSAGE, exploit node features through neighborhood aggregation, but both approaches individually struggle to capture the full spectrum of relationships and semantic patterns encoded in KGs, especially in low-feature settings~\cite{b100}. 

In response to the limitations of single-view embeddings, research has also embraced multi-view fusion strategies~\cite{b95}. These techniques aim to enhance the representation quality by capturing different perspectives of the same graph. The most explored strategies are: concatenation, which merges multiple embeddings to increase coverage; attention-based mechanisms, which assign context-aware importance to each embedding type; and autoencoders, which project different embeddings into a shared latent space, thereby enabling compression of redundant information~\cite{b94}. 
In the end, meta-inductive models, rather than training on a single fixed graph, acquire transferable temporal meta-knowledge capable of generalising across domains at different time~\cite{b98}. 

Despite the breadth of methodologies developed to improve the quality and effectiveness of GEs and the growing attention toward strategies that address feature scarcity, current approaches do not explicitly explore the fusion of Node2Vec with centrality features as an auxiliary input to GraphSAGE to create GEs enhanced representations that directly target the limitations typical of small-feature datasets. This gap underscores the need for a novel hybrid strategy capable of capturing both structural behaviour and semantics in scenarios characterised by limited node availability.

To the best of our knowledge, no prior study has investigated this specific hybridisation pipeline, where unsupervised structural embeddings and centrality measures are explicitly combined with supervised feature-aware representations in an adaptive way within the context of low-features graph scenarios. This represents a promising and innovative approach to address the challenges of embedding expressiveness under dataset constraints.

\section{Bi-View Strategy} \label{sec:bi-view_strategy}
To address classification problems in poor-features datasets within KGs, we propose a hybrid strategy that merges global topological structure with local semantic information. Our proposed approach (Figure~\ref{fig:biviewarch}), is composed of five main components. In particular, the process begins with the Node2Vec Encoder, which captures the structural properties of the graph by generating node global connectivity embeddings $Z^{(n2v)}$ through biased random walks. In parallel, the Centrality Vector module computes a set of graph-theoretic metrics $\gamma(V)$, including measures such as PageRank and Betweenness, to quantify the importance of each node within the overall structure.

Both the structural embeddings and the centrality metrics are then combined in the Feature Aggregator, which builds enriched node representations $F_{\text{agg}} = \mathcal{A}(\gamma(V), Z^{(n2v)})$ that serve as input to the next component. The GraphSAGE Aggregator applies supervised neighborhood aggregation over these features to learn new embeddings $Z^{(sage)}$,  integrating local semantic information with the previously extracted global features.

Finally, the Embedding Fusion module adaptively combines the two views ($Z^{(n2v)}$ and $Z^{(sage)}$) through a learnable mechanism that balances their contributions based on node-specific characteristics. The result is a fused representation $Z^{(fused)}$ for each node, which we refer to as the Bi-View embedding. This representation captures both the structural roles and semantic contexts of nodes in new embeddings, improving classification tasks in 
poor-features KGs scenarios.

\subsection{Node2Vec Encoder}

Let \( Z^{(n2v)} \in \mathbb{R}^{n \times d_1} \) be the matrix of Node2Vec embeddings, where each row \( \varphi(v_i) \in \mathbb{R}^{d_1} \) is the embedding of node \( v_i \) generated by making biased random walks over the graph. These embeddings represent topological similarity, capturing information depending on the tuning of the return parameter \( p \) and the in-out parameter \( q \).

\subsection{Feature Aggregator}

To increase the number of features without adding any artificial data, we add a set of classical centrality metrics that are calculated for each node. Formally,  let \( \gamma(v_i) \in \mathbb{R}^{d_2} \) be the centrality-based features vector for a node \( v_i \), including measures such as degree, betweenness, and PageRank. These topological metrics are concatenated with the Node2Vec embeddings in order to create an enriched initial feature vector:
\[
h^{(0)}_i = \left[ \ell_V(v_i) \, \| \, \varphi(v_i) \, \| \, \gamma(v_i) \right] \in \mathbb{R}^{d_0},
\]
where \( \ell_V(v_i) \) is the attribute-based features of node \( v_i \), and \( \| \) the vector concatenation.

\subsection{GraphSAGE Aggregator}

GraphSAGE is then applied by using \( h^{(0)}_i \) as input features. In particular, at each layer \( k \), the node embeddings are updated as follows:
\[
h^{(k)}_i = \sigma \left( W^{(k)} \cdot \text{AGG}^{(k)} \left( \{ h^{(k-1)}_j \mid j \in \mathcal{N}(i) \} \cup \{ h^{(k-1)}_i \} \right) \right),
\]
where \( \text{AGG}^{(k)} \) is a permutation-invariant aggregator, \( W^{(k)} \) is a learnable weight matrix, and \( \sigma \) is a non-linear activation function. The output embeddings of this step are \( Z^{(sage)} \in \mathbb{R}^{n \times d_2} \).

\subsection{Embedding Fusion}

A dynamic fusion mechanism that learns node-specific main characteristics to generate enhanced GEs representations is then applied. In particular, for each node \( i \), the fusion coefficient \( \alpha_i \in (0,1) \) is calculated as:
\[
\alpha_i = \sigma \left( W_\alpha \cdot \left[ z^{(n2v)}_i \, \| \, z^{(sage)}_i \right] + b_\alpha \right),
\]
where \( \sigma \) is the sigmoid function and \( W_\alpha \in \mathbb{R}^{1 \times (d_1 + d_2)} \), \( b_\alpha \in \mathbb{R} \) are learnable parameters.

The final fused embedding is given by:
\[
z^{(fused)}_i = \alpha_i \cdot z^{(n2v)}_i + (1 - \alpha_i) \cdot z^{(sage)}_i.
\]
Allowing model balancing between structural and semantic information per node.

\subsection{Centrality Vector}

This component computes a set of classical centrality metrics for each node, giving deterministic structural features. In particular, PageRank calculates node importance based on the probability of arriving at a node via random walks, while Betweenness Centrality measures how frequently a node appears on the shortest paths between other node pairs, representing its control over information flow in the network.

Formally, for each node $v \in V$, the centrality vector is defined as:
\[
\gamma(v) = [\mathrm{deg}(v), \mathrm{PageRank}(v), \mathrm{Betweenness}(v), \ldots] \in \mathbb{R}^{d_2}
\]
These features enrich the node representation with structural insights that are independent from the aggregation process.

\begin{figure}
    \centering
    \includegraphics[width=0.8\linewidth]{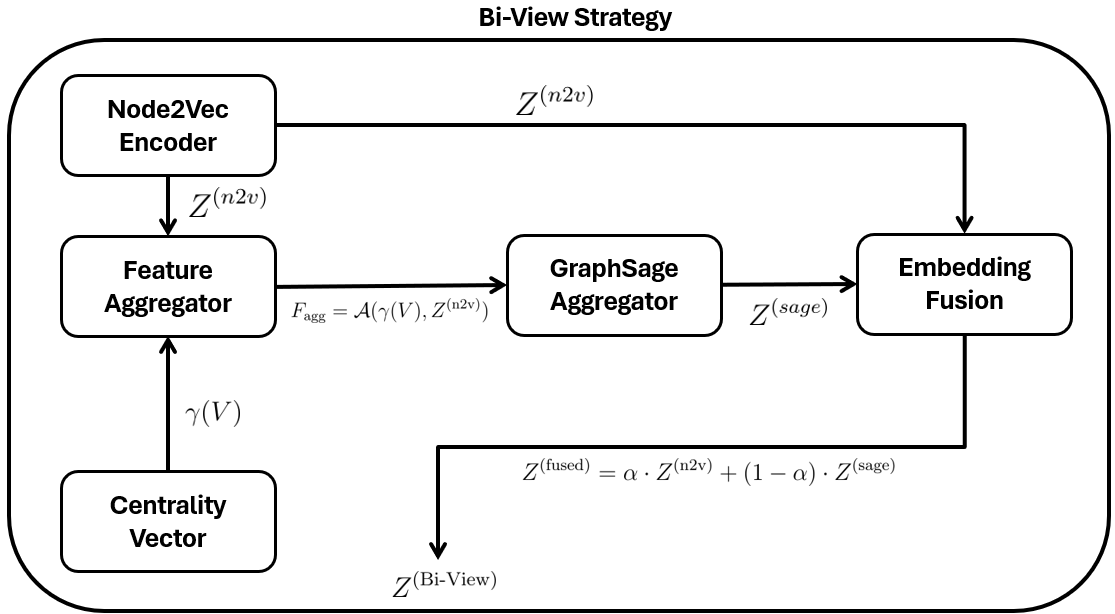}
    \caption{Bi-View strategy.}
    \label{fig:biviewarch}
\end{figure}

\section{Model Validation of Structural Enrichment in GraphSAGE via Node2Vec Embeddings and Centrality Metrics} \label{sec:model_validation}

In this Section, we provide a formal validation to describe how structural embeddings generated by Node2Vec and centrality features affect the behaviour of the GraphSAGE aggregation process. Our goal is to model the transition from topological unsupervised learning to a semantically enriched representation, where structural information derived from random walks and graph-theoretic properties is used to boost the learning process.

\begin{definition}{Node2Vec as a Generator of Structural Features}

Let \( KG = (V, R, E, \ell_V) \) be a KG. Node2Vec \cite{b78} is based on an embedding function:
\[
\phi : V \rightarrow \mathbb{R}^{d_1},
\]

Where the embedding \( \phi(v) \) captures the topology of a node \( v \) within the network, interdependently of any node label or attribute.
\end{definition}

\begin{definition}[Centrality Vector]
Let \( \gamma : V \rightarrow \mathbb{R}^{d_2} \) be a function that maps each node \( v \in V \) to a vector of centrality metrics.
\[
\gamma(v) = \left[ \text{deg}(v), \text{PageRank}(v), \text{Betweenness}(v), \ldots \right],
\]
\end{definition}

These values give a deterministic perspective, acting as global topological analysis that is independent from random walks.

\begin{definition}[Structurally Enriched Initial Features]
We define the initial features vector that is given to the GraphSAGE aggregator  for each node \( v \in V \) as:
\[
h^{(0)}_v = [\ell_V(v) \, \| \, \phi(v) \, \| \, \gamma(v)] \in \mathbb{R}^{d_0},
\]
where \( \ell_V(v) \) is the original attribute-based feature, \( \phi(v) \) is the Node2Vec embedding and \( \gamma(v) \) is the centralities vector.
\end{definition}

\begin{definition}[GraphSAGE with Structural Feature Injection]
Let \( h^{(0)}_v \in \mathbb{R}^{d_0} \) be an enriched feature vector. At each layer \( k \), GraphSAGE updates the node embeddings using the following rule:
\[
h^{(k)}_v = \sigma \left( W^{(k)} \cdot \text{AGG}^{(k)}\left( \{ h^{(k-1)}_u \mid u \in \mathcal{N}(v) \} \cup \{ h^{(k-1)}_v \} \right) \right),
\]
\end{definition}

\begin{lemma}[Structural Enrichment Increases Neighborhood Discriminability]
Let \( h^{(0)}_v \) and \( h^{(0)}_u \) be the enhanced GEs of two distinct nodes \( v, u \in V \) and \( \ell_V(v) \) and \( \ell_V(u) \) the not enhanced GEs. Then, if \( \phi(v) \neq \phi(u) \) or \( \gamma(v) \neq \gamma(u) \), it follows that:
\[
\| h^{(0)}_v - h^{(0)}_u \|_2 \geq \| \ell_V(v) - \ell_V(u) \|_2.
\]
\end{lemma}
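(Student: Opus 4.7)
The plan is to exploit the fact that the enhanced vector $h^{(0)}_v$ is just the concatenation of three blocks, so its squared Euclidean norm decomposes additively across those blocks. Concretely, since $h^{(0)}_v = [\ell_V(v) \,\|\, \phi(v) \,\|\, \gamma(v)]$ and the concatenation operator preserves coordinates, I would observe that $h^{(0)}_v - h^{(0)}_u = [\ell_V(v)-\ell_V(u) \,\|\, \phi(v)-\phi(u) \,\|\, \gamma(v)-\gamma(u)]$. Taking squared norms block by block gives
\[
\|h^{(0)}_v - h^{(0)}_u\|_2^{2} = \|\ell_V(v)-\ell_V(u)\|_2^{2} + \|\phi(v)-\phi(u)\|_2^{2} + \|\gamma(v)-\gamma(u)\|_2^{2}.
\]
This identity is the single key step; everything else follows by positivity.

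Next I would argue that each of the two added summands is non-negative, and that under the hypothesis ($\phi(v)\neq\phi(u)$ or $\gamma(v)\neq\gamma(u)$) at least one of them is strictly positive. Therefore
\[
\|h^{(0)}_v - h^{(0)}_u\|_2^{2} \;\geq\; \|\ell_V(v)-\ell_V(u)\|_2^{2},
\]
and since the square root is monotone on $\mathbb{R}_{\geq 0}$, the claimed inequality in $\|\cdot\|_2$ follows immediately. Equality holds exactly when $\phi(v)=\phi(u)$ and $\gamma(v)=\gamma(u)$, which is excluded by assumption in the strict case.

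I do not anticipate a serious obstacle: the statement is essentially a Pythagorean observation about concatenated vectors, and no property of $\phi$ or $\gamma$ beyond being real-valued coordinate functions is needed. The only point that deserves care in the write-up is justifying the additive decomposition of the squared norm, which requires emphasizing that the three blocks occupy disjoint coordinate ranges in $\mathbb{R}^{d_0}$ with $d_0 = \dim \ell_V + d_1 + d_2$, so there are no cross terms. A brief remark interpreting the result, namely that injecting $\phi$ and $\gamma$ can only spread apart (never compress) the pairwise distances used by $\mathrm{AGG}^{(k)}$ in the subsequent GraphSAGE layer, would tie the lemma back to the motivation of Section~\ref{sec:model_validation}.
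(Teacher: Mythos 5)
Your proposal is correct and follows essentially the same route as the paper's own proof: decompose the squared Euclidean distance of the concatenated vectors into block-wise contributions (the Pythagorean identity), note the added terms are non-negative, and conclude by monotonicity of the square root. Your version is slightly more careful about justifying the absence of cross terms and the equality case, but the argument is the same.
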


\begin{proof}
Since the enriched feature vector includes additional components, in form of:
\[
h^{(0)}_v = [\ell_V(v) \, \| \, \phi(v) \, \| \, \gamma(v)],
\]
Any difference in \( \phi \) or \( \gamma \) will add non-negative contributions to the Euclidean distance. By the Pythagorean theorem:
\[
\| h^{(0)}_v - h^{(0)}_u \|_2^2 = \| \ell_V(v) - \ell_V(u) \|_2^2 + \| \phi(v) - \phi(u) \|_2^2 + \| \gamma(v) - \gamma(u) \|_2^2.
\]
Thus, the distance is always greater than or equal to that computed using initial attributes.
\end{proof}

\begin{definition}[Enriched Neighborhood Aggregation Influence]
Let \( \mathcal{N}(v) \) be the set of neighbors of node \( v \), with \( h^{(0)}_v \) the enriched GE. The influence score of node \( v \) at layer \( k \), denoted \( I_k(v) \), is defined as
\[
I_k(v) = \frac{1}{|\mathcal{N}(v)|} \sum_{u \in \mathcal{N}(v)} \| h^{(k-1)}_v - h^{(k-1)}_u \|_2.
\]
\end{definition}

\begin{lemma}[Structural Enrichment Amplifies Neighborhood Gradient]
Let \( I_k(v) \) be the influence score of node \( v \) under standard GraphSAGE, and \( \tilde{I}_k(v) \) be the same score using structurally enriched features. Then:
\[
\tilde{I}_k(v) \geq I_k(v),
\]
Assuming that \( \phi(u) \) and \( \gamma(u) \) differ across neighborhoods.
\end{lemma}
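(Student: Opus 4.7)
The plan is to prove the inequality by combining Lemma 1 (Structural Enrichment Increases Neighborhood Discriminability) with an induction on the GraphSAGE layer index $k$. Throughout, let $\tilde{h}^{(\cdot)}$ denote the representations computed using enriched initial features $h^{(0)}_v = [\ell_V(v) \,\|\, \phi(v) \,\|\, \gamma(v)]$, and let $h^{(\cdot)}$ denote those computed using only $\ell_V(v)$.

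First, I would establish the base case $k=1$. Here $I_1(v)$ and $\tilde{I}_1(v)$ are defined directly on initial features, so for each $u \in \mathcal{N}(v)$ with $\phi(u)\neq\phi(v)$ or $\gamma(u)\neq\gamma(v)$, Lemma 1 gives
\[
\| \tilde{h}^{(0)}_v - \tilde{h}^{(0)}_u \|_2 \;\geq\; \| \ell_V(v) - \ell_V(u) \|_2.
\]
Averaging this pointwise inequality over $\mathcal{N}(v)$ preserves the ordering (monotonicity of the sum), so $\tilde{I}_1(v) \geq I_1(v)$. This settles the first layer in one line.

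Second, for $k>1$, I would proceed by induction, assuming that $\|\tilde{h}^{(k-2)}_v - \tilde{h}^{(k-2)}_u\|_2 \geq \|h^{(k-2)}_v - h^{(k-2)}_u\|_2$ for all $u \in \mathcal{N}(v)$. The GraphSAGE update applies a common aggregator, weight matrix $W^{(k-1)}$, and activation $\sigma$ in both settings. For a mean or sum aggregator (linear in its arguments), the difference $\tilde{h}^{(k-1)}_v - \tilde{h}^{(k-1)}_u$ can be related to a linear transformation of the inductive difference. Under the standing assumption that $\sigma$ is monotone and $W^{(k-1)}$ is shared across settings with a uniform positive lower Lipschitz bound on the relevant subspace, the gap at the input propagates to the output; averaging then yields $\tilde{I}_k(v) \geq I_k(v)$.

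The hard part is exactly this propagation step: nonlinearities such as ReLU can collapse negative components and project away part of the enriched signal, while $W^{(k-1)}$ may act contractively on the Node2Vec or centrality subspace. A fully generic proof is therefore out of reach without architectural assumptions. I would address this by stating explicit hypotheses in the proof preamble (shared weights across the two settings, monotone non-contractive activation, and full-rank weight matrices), so that the inductive step closes cleanly. Under these standard GML regularity conditions, the base case plus induction gives the claimed inequality; without them, the claim is rigorously supported only for $k=1$, where it reduces directly to Lemma 1.
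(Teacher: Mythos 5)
Your proposal is more careful than the paper's own proof, and the comparison is instructive. The paper's proof of this lemma is a single informal paragraph: it says ``similarly to the previous lemma'' the added components of \( \phi \) and \( \gamma \) increase pairwise differences across the neighbourhood, and then lists three qualitative benefits (separability, information gradient, generalisation). In effect, the paper only argues the case where the influence score is computed on the raw concatenated features \( h^{(0)} \) --- i.e.\ your base case \( k=1 \), where the orthogonal-decomposition argument of the first lemma applies and averaging over \( \mathcal{N}(v) \) preserves the pointwise inequality. Your base case is therefore essentially the paper's entire argument, made precise.

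Where you genuinely diverge is in recognising that \( I_k(v) \) for \( k>1 \) is defined on \( h^{(k-1)} \), which has already passed through the aggregator, the weight matrix \( W^{(k-1)} \), and the nonlinearity \( \sigma \). The paper never addresses this propagation step at all; you correctly identify that it is the crux, that ReLU-type activations and contractive or rank-deficient weight matrices can destroy the distance ordering, and that closing the induction requires explicit architectural hypotheses (shared weights across the two settings, a lower Lipschitz bound, monotone non-contractive activations). Your candour that ``without them, the claim is rigorously supported only for \( k=1 \)'' is an accurate assessment of the lemma as stated: the paper's assumption that \( \phi(u) \) and \( \gamma(u) \) ``differ across neighborhoods'' is not sufficient to guarantee \( \tilde{I}_k(v) \geq I_k(v) \) for general \( k \). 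One further subtlety worth flagging if you formalise this: even with shared weights, the two settings feed inputs of \emph{different dimensions} into layer 1 (\( d_0 \) versus the dimension of \( \ell_V \)), so \( W^{(1)} \) cannot literally be the same matrix in both pipelines; any rigorous version of the inductive step must say how the two weight matrices are related, which neither the paper nor your sketch currently does.
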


\begin{proof}
Similarly to the previous lemma, the added feature space components from \( \phi \) and \( \gamma \) increase differences across the neighbourhood, increasing the mean pairwise distances and enhancing node representations with global structural embeddings and topological information, in particular:
1) improves the initial separability of nodes in the embedding space; 
2) enhances the information gradient available during message passing; 
3) facilitates better generalisation under low-data regimes by reducing feature sparsity.

\end{proof}

\section{Supervised Fusion Model for Enhanced Bi-View Node Representations} \label{sec:sp_model}
To capture the best specific characteristics of both Node2Vec and GraphSAGE embeddings, we propose a supervised fusion strategy that learns the best optimised representation for node classification. 
Let $z_{i}^{(n2v)}\in R^{d_{1}}$ be the unsupervised embedding of node $i$ produced by Node2Vec, and $z_{i}^{(sage)}\in R^{d_{2}}$ the supervised embedding generated by GraphSAGE, which uses node2vec and centrality features. We concatenate both embeddings into a single vector:
$$z_{i}^{(cat)}=[z_{i}^{(n2v)}||z_{i}^{(sage)}]\in R^{d_{1}+d_{2}}$$
This concatenated representation is then passed through a learnable encoder function $f_{\theta}:R^{d_{1}+d_{2}}\rightarrow R^{d}$ implemented as a multi-layer perceptron (MLP), which learns the best task-specific embedding:
$$z_{i}^{(enh)}=f_{\theta}(z_{i}^{(cat)})$$
The resulting embedding $z_{i}^{(enh)}\in R^{d},$ with $d\ll d_{1}+d_{2}$ is used for classification via a function:
$$\hat{y}_{i}=softmax(W_{c}\cdot z_{i}^{(enh)}+b_{c})$$
where $W_{c}\in R^{|\mathcal{Y}|\times d}$ and $b_{c}\in R^{|\mathcal{Y}|}$ are trainable parameters. In the end, the full model is trained over the labelled node set $\mathcal{L}$:
$$\mathcal{L}_{enh}=-\sum_{i\in\mathcal{L}}y_{i}\cdot log\hat{y}_{i}$$
This supervised fusion mechanism allows the model to automatically extract the most important aspects of both global (Node2Vec) and local (GraphSAGE) node contexts, resulting in a topologically and semantically enriched embedding space.

\begin{lemma}[Expressiveness of Fused Embeddings]
    Let $z_{i}^{(n2v)}\in R^{d_{1}}$ and $z_{i}^{(sage)}\in R^{d_{2}}$ be the unsupervised and supervised embeddings of node $i$, respectively. Let $f_{\theta}:R^{d_{1}+d_{2}}\rightarrow R^{d}$ be a Multi-Layer Perceptron (MLP) our fusion function, where $d \le d_1+d_2$. The resulting fused embedding is $z_{i}^{(enh)} = f_{\theta}([z_{i}^{(n2v)}||z_{i}^{(sage)}])$. Given that $f_{\theta}$ is a non-linear function, the fused embeddings $Z^{(enh)} = \{z_{i}^{(enh)} | i \in V\}$ can capture complex interactions between the two original embedding spaces.
\end{lemma}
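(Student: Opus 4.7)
The plan is to argue expressiveness in two complementary steps: first showing that a linear or concatenation-only fusion cannot represent cross-component interactions between the two embedding spaces, and then invoking the Universal Approximation Theorem to certify that the non-linear MLP $f_{\theta}$ can.

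First I would formalise what ``complex interactions'' means. I would call a function $g:\mathbb{R}^{d_{1}+d_{2}}\to\mathbb{R}^{d}$ \emph{separable} if $g([u\,\|\,v]) = Au+Bv+c$ for some matrices $A,B$ and vector $c$, and \emph{interactive} otherwise. A minimal witness of non-separability is a coordinate that depends on a cross product $u_{j}\cdot v_{k}$, which provably cannot be produced by any affine map on $[u\,\|\,v]$. This makes precise the representational limitation of a plain linear fusion layer or mere concatenation, and provides the class of functions the lemma claims $f_{\theta}$ can reach.

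Second I would invoke the Universal Approximation Theorem of Cybenko and Hornik: any continuous function on a compact subset of $\mathbb{R}^{d_{1}+d_{2}}$ is uniformly approximated to arbitrary accuracy by a feed-forward network with one hidden layer and a non-polynomial activation such as ReLU, sigmoid or tanh. Applied coordinate-wise to the $d$ outputs of $f_{\theta}$, this yields that for any continuous target $g^{*}$, including an interactive one, there exist parameters $\theta$ making the approximation error arbitrarily small. Hence cross-interactions such as $z^{(n2v)}_{j}\cdot z^{(sage)}_{k}$, or more elaborate non-linear composites, lie within the representational closure of the hypothesis class defined by $f_{\theta}$.

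The hard part will be that the statement is essentially qualitative: ``can capture'' is a claim about representational capacity, not a guarantee that gradient-based optimisation on finite data actually recovers such a $\theta$. I would therefore be careful to distinguish \emph{expressiveness} from \emph{learnability} and restrict the conclusion to the former. A secondary technical point is justifying the compactness hypothesis of the Universal Approximation Theorem; this follows from the boundedness of the Node2Vec embeddings under standard post-training normalisation and of the GraphSAGE outputs under bounded activations such as tanh or sigmoid, or more simply from the finiteness of $V$, which makes the image of both embedding maps a finite and hence compact subset of $\mathbb{R}^{d_{1}+d_{2}}$.
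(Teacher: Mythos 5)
Your proposal is correct, but it takes a genuinely different and substantially more rigorous route than the paper. The paper's own proof is essentially a narrative restatement: it observes that the concatenation $[z_{i}^{(n2v)}\,\|\,z_{i}^{(sage)}]$ lives in a $(d_1+d_2)$-dimensional space carrying information from both views, that the MLP combines this heterogeneous information into a lower-dimensional representation, and then asserts that the fused space is ``capable of representing a richer set of patterns'' than either view alone --- no formalisation of ``complex interactions,'' no representational-capacity argument, and no theorem is invoked. You instead supply exactly the scaffolding the paper omits: a precise definition of interaction as non-separability (failure to decompose as $Au+Bv+c$), a proof that affine fusion cannot realise cross terms $u_j\cdot v_k$, and the Universal Approximation Theorem to certify that the non-linear MLP can, with the compactness hypothesis discharged by the finiteness of $V$. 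Your explicit separation of expressiveness from learnability is also a distinction the paper never draws, though its qualitative ``can capture'' phrasing implicitly relies on it. What your approach buys is an actual mathematical argument for a lemma the paper only motivates; what the paper's version buys is brevity and consistency with the informal register of its other lemmas (whose proofs are similarly descriptive). One small caveat worth adding if you write this up: the UAT guarantees arbitrary accuracy only for sufficiently wide hidden layers, so the conclusion is about the hypothesis class ranged over all widths, not about the fixed architecture used in the paper's experiments --- though on a finite node set exact interpolation with finitely many hidden units is available and makes even this moot.
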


\begin{proof}
    The concatenation $[z_{i}^{(n2v)}||z_{i}^{(sage)}]$ results in a feature vector with a dimension equivalent to the Cartesian product of the individual embedding spaces, i.e., $R^{d_1} \times R^{d_2}$. The dimension of this concatenated space is $d_1+d_2$.

The multi-layer perceptron $f_{\theta}$ is applied to the concatenation of Node2Vec and GraphSAGE embeddings, allowing for the combination of heterogeneous information into a lower-dimensional space to generate better feature descriptions for classification tasks. In particular, the fused embedding space $Z^{(enh)}$ is capable of representing a richer set of patterns and relationships than either $\mathcal{Z}_{n2v}$ or $\mathcal{Z}_{sage}$ individually.
\end{proof}

\section{Experimental Evaluation on Food and Drug Administration Adverse Effects Dataset} \label{sec:experiments}
\subsection{Dataset Description}
To assess the impact of our framework, we applied it to a real-world dataset derived from the United States Food and Drug Administration (FDA)  Adverse Event Reporting System (FAERS)\footnote{https://open.fda.gov/data/faers/}. FAERS is a database developed by the FDA to support post-marketing safety surveillance of approved drugs. In particular, it collects reports of adverse events and medication errors submitted by healthcare professionals, consumers and manufacturers. 
The dataset was formalised as a KG\footnote{https://github.com/neo4j-graph-examples/healthcare-analytics/tree/main/data}. with classes: Case, Drug, Reaction, Outcome, ReportSource, Therapy and Age Group (Fig. \ref{fig:faers_schema}) that are highly imbalanced (Fig. \ref{fig:class_distribution}).

\begin{figure}[ht]
    \centering
    \begin{minipage}[b]{0.46\linewidth}
        \centering
        \includegraphics[width=\linewidth]{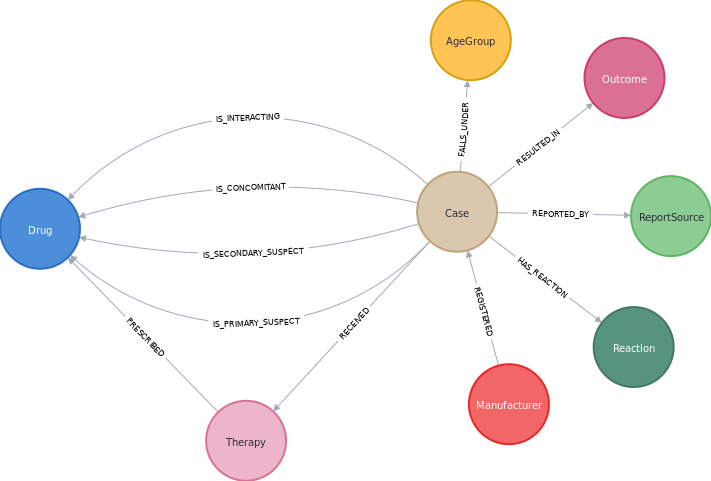}
        \caption{KG schema for FAERS dataset.}
        \label{fig:faers_schema}
    \end{minipage}
    \hfill
    \begin{minipage}[b]{0.50\linewidth}
        \centering
        \includegraphics[width=\linewidth]{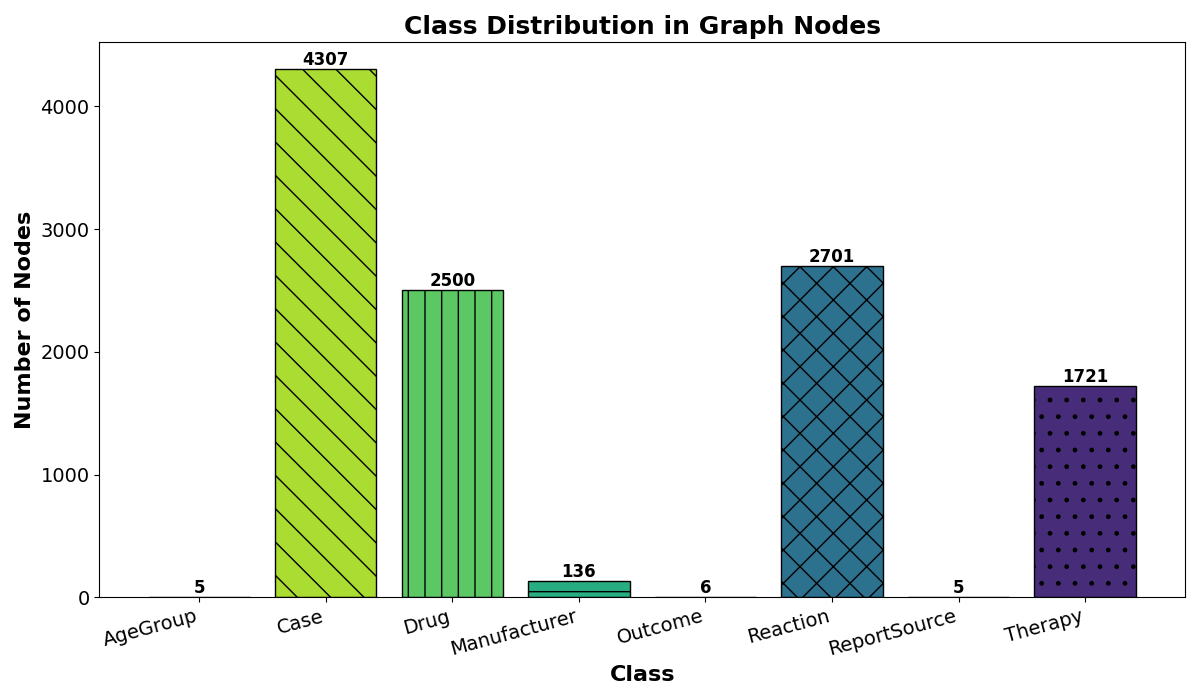}
        \caption{Class distribution.}
        \label{fig:class_distribution}
    \end{minipage}
\end{figure}

\section{Experimental Setup and Model Evaluation}\label{sec:setup_and_model_validation}
We carried out experiments using Neo4j 5.24.0\footnote{\url{https://neo4j.com}}, with the Graph Data Science (GDS) library version 2.12.0\footnote{\url{https://neo4j.com/docs/graph-data-science}}. Starting from this small real-world dataset (\textasciitilde11{,}000 elements), our goal was to train a classifier capable of identifying the most frequent node classes, which are: Patient, Drug, Therapy and Manufacturer. 
For topological embeddings, the Node2Vec algorithm was used with a dimension of 64 elements, a walk length of 80, and 10 iterations, setting both the return and in-out parameters to 1.0. GraphSAGE was then employed with a dimension of 64, using as input features: the centrality properties (betweenness and pageRank), the class label, and the results coming from Node2Vec. GEs were combined using a supervised FusionNet model, composed of an MLP with a hidden dimension of 128 and an output dimension of 64, trained with cross-entropy loss over 100 epochs. In the end, a decision tree was implemented to make classification comparisons among the experiments.
The code with hyperparameter configurations used for the entire pipeline will be made public upon acceptance.

Initially, we generated feature vectors containing only node class labels, with PageRank and Betweenness Centrality as additional features. Finally, we compared the classifier’s performance under three different configurations: (1) using only Node2Vec embeddings, (2) using GraphSAGE embeddings, and (3) using the fused embeddings generated by our strategy (Bi-View).

The network embeddings visualisation was done with Principal Component Analysis (PCA), which is a dimensionality reduction algorithm that allows to understand the quality of latent representations generated by models. In particular, this section also presents a comparative analysis of 2D embeddings obtained from the three distinct models, in order to visually evaluate inter-class separability within the latent space.
From experiments, Node2Vec (Fig. 4) performs the weakest in terms of embedding clarity, with extensive class overlap and poor separability, including for Classes AgeGroup and Case. GraphSAGE (Fig. 5), in contrast, arranges its embeddings along a smooth crescent-shaped curve, lacking compact clusters but possibly capturing latent class semantic relationships in a one-dimensional progression, with its curvilinear structure complicating classification using standard decision boundaries. In the end, Bi-View (Fig. 6) exhibits a strong ability to form well-separated and compact clusters, particularly for class Case, and moderately for Classes AgeGroup and Outcome, aligning with its high classification performance.  

\begin{figure}[ht]
    \centering
    \begin{minipage}[b]{0.32\linewidth}
        \centering
        \includegraphics[width=\linewidth]{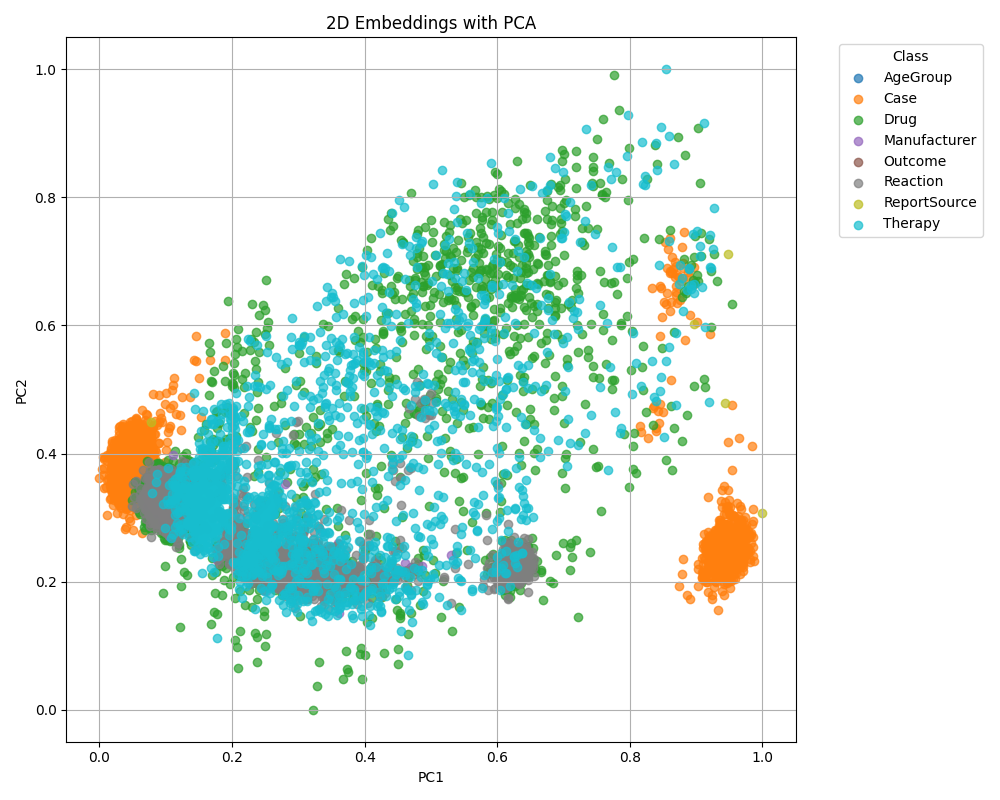}
        \caption{Node2Vec}
        \label{fig:node2vec_embeddings}
    \end{minipage}
    \begin{minipage}[b]{0.32\linewidth}
        \centering
        \includegraphics[width=\linewidth]{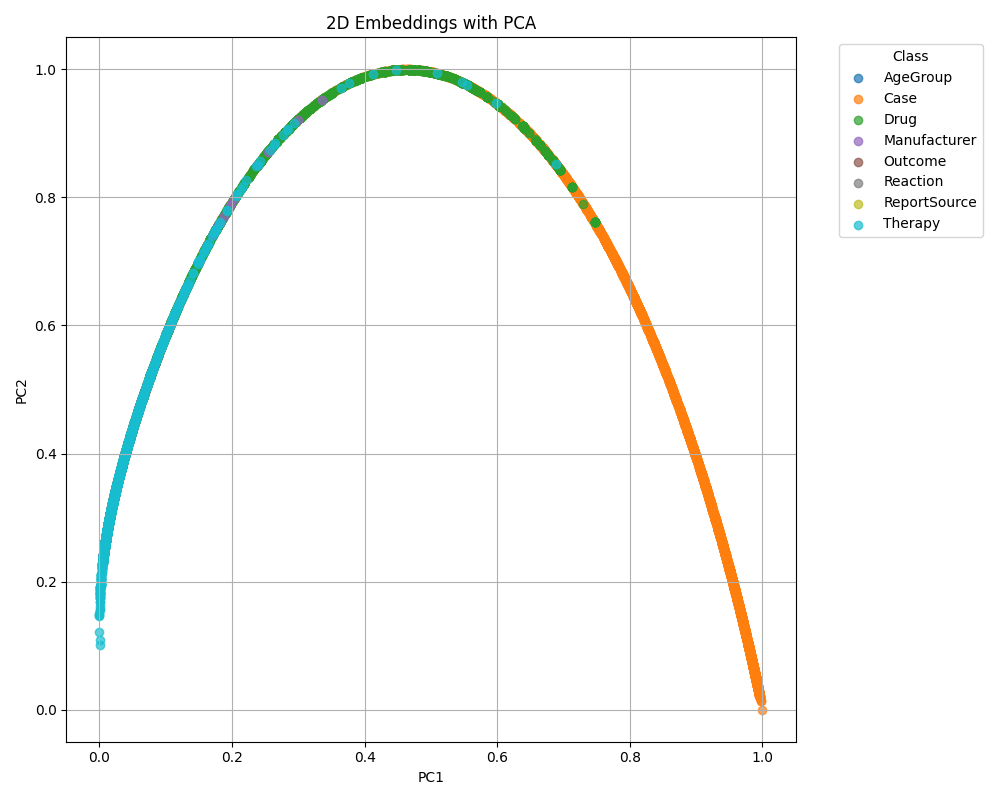}
        \caption{GraphSAGE}
        \label{fig:graphsage_embeddings}
    \end{minipage}
    \begin{minipage}[b]{0.32\linewidth}
        \centering
        \includegraphics[width=\linewidth]{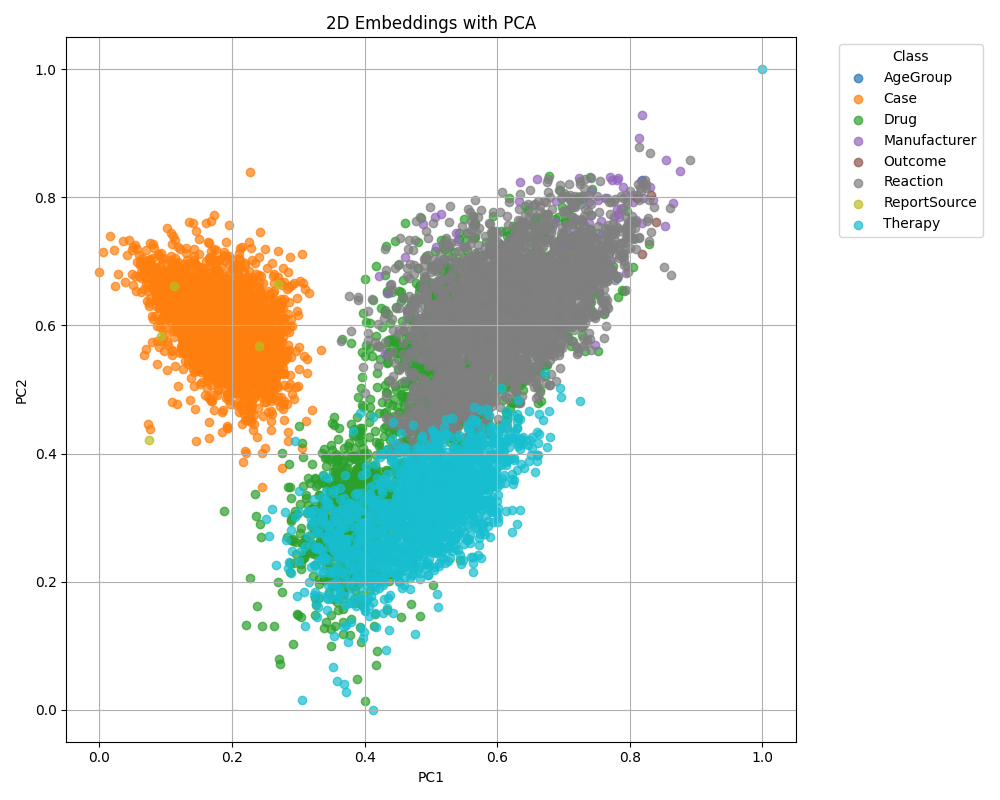}
        \caption{Bi-View}
        \label{fig:enhanced}
    \end{minipage}
    
    \vspace{1em} 

    \begin{minipage}[b]{0.49\textwidth}
        \centering
        \includegraphics[width=\linewidth]{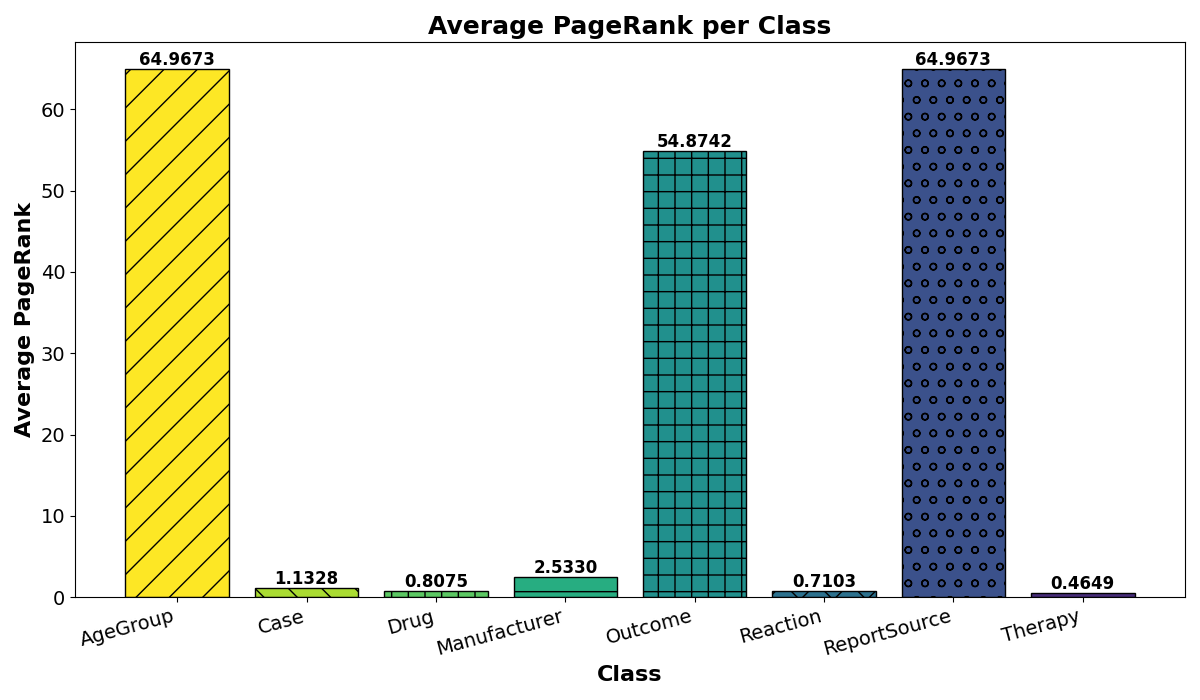}
        \caption{Average PageRank per Class.}
        \label{fig:avg-pagerank}
    \end{minipage}
    \begin{minipage}[b]{0.49\textwidth}
        \centering
        \includegraphics[width=\linewidth]{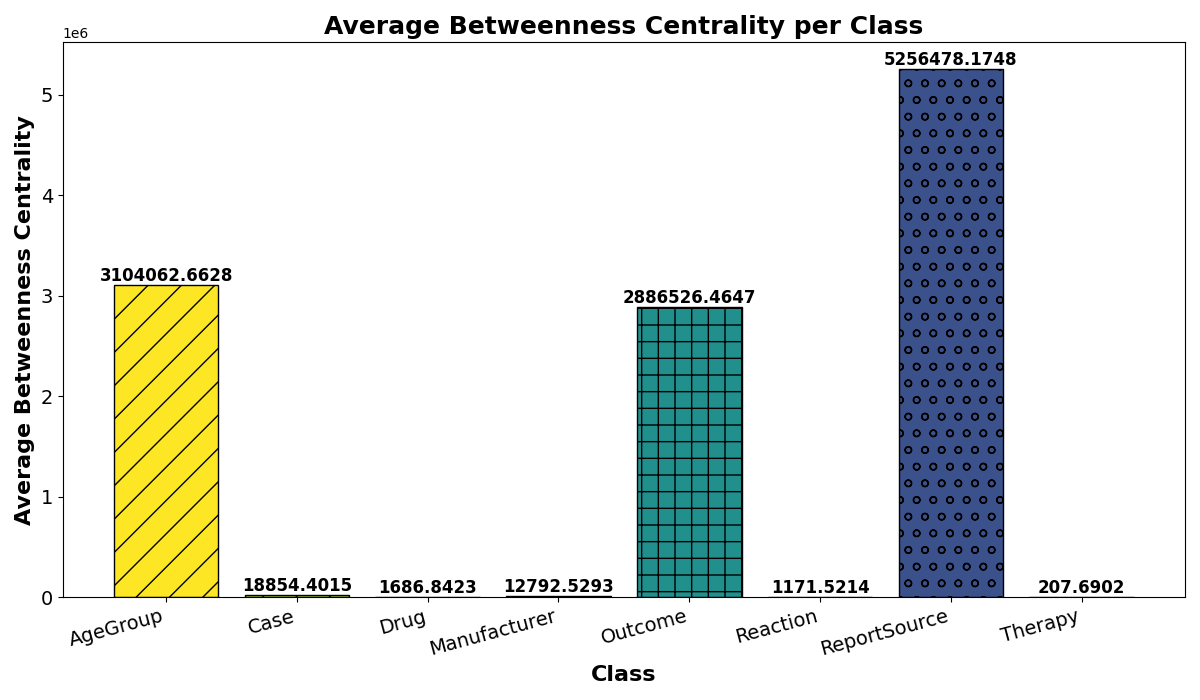}
        \caption{Average Betweenness per Class.}
        \label{fig:avg-betweenness}
    \end{minipage}
\end{figure}
Given the highly imbalanced nature of the dataset (Fig. \ref{fig:class_distribution}) where class samples show differences of up to three orders of magnitude, the evaluation has been limited to the four classes with the highest number of instances: Case, Drug, Reaction and Therapy. This restriction enables a more robust comparison across the Bi-View, GraphSAGE, and Node2Vec models, avoiding distortions caused by extremely rare classes.
To better justify our choice, node count alone does not fully capture the structural importance of each class within the graph. For these reasons, we also obtain insights drawn from the plots of average PageRank (Fig. \ref{fig:avg-pagerank}) and average betweenness (Fig. \ref{fig:avg-betweenness}) centrality, which reveal that the underrepresented classes exhibit exceptionally high centrality scores, indicating that their nodes act as central hubs that are basically in common to the other classes. As a result, using these highly central nodes as features for classification could introduce noise rather than discriminative power, since their presence does not provide class-specific information but reflects a global network structure shared across all classes, negatively affecting the model's ability to distinguish between classes and reducing its overall performance, thus motivating our choice to ignore them for classification purpose. 

We calculated Accuracy, Precision, Recall and F1-Score to evaluate the performance of the three approaches, in particular: Bi-View, Node2Vec and GraphSAGE on classes with at least 200 samples. Bi-View (Fig. \ref{fig:metrics} ) outperforms the other methods for all metrics, reaching an accuracy close to 94\% with high precision and recall, leading to a strong F1-score. Node2Vec shows intermediate performance, with scores consistently higher than GraphSAGE but still markedly below Bi-View. On the contrary, GraphSAGE reports the lowest values in all metrics, with a particularly reduced F1-score of around 64\%. These results suggest that Bi-View is more effective at capturing patterns with class imbalance, even when the evaluation is restricted to the most statistically significant classes. In the following analysis, we examine the confusion matrices (Fig. \ref{fig:node2vec}, Fig. \ref{fig:graphsage} and Fig. \ref{fig:enhanced}) for a deeper understanding of how each model behaves on these dominant classes.

\begin{figure}[ht!]
    \centering
    \includegraphics[width=0.85\linewidth]{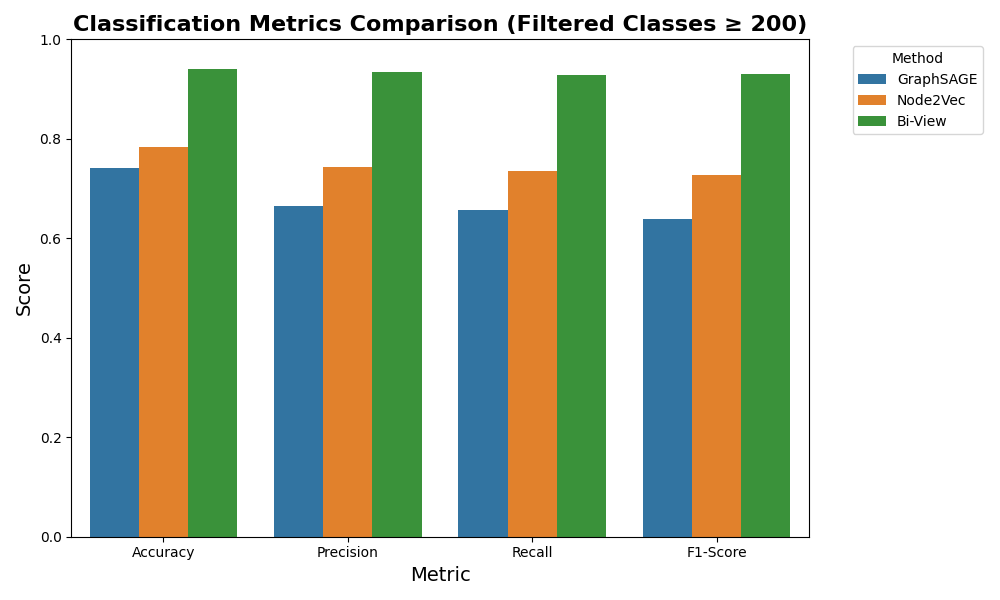}
    \caption{Models Comparison.}
    \label{fig:metrics}
\end{figure}
\vspace{-3.4 em}  
\begin{figure}[ht]
    \centering
    \begin{minipage}[b]{0.325\linewidth}
        \centering
        \includegraphics[width=\linewidth]{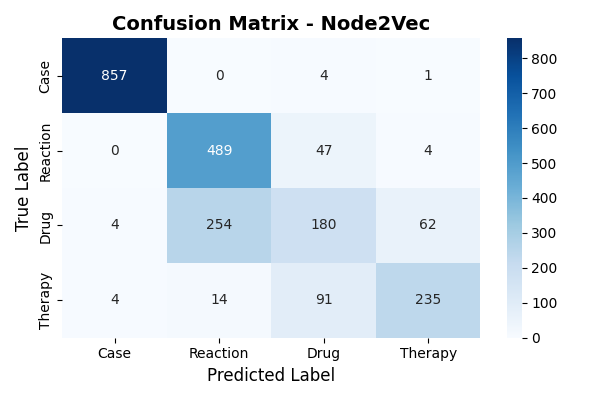}
        \caption{Node2Vec}
        \label{fig:node2vec}
    \end{minipage}
    \begin{minipage}[b]{0.325\linewidth}
        \centering
        \includegraphics[width=\linewidth]{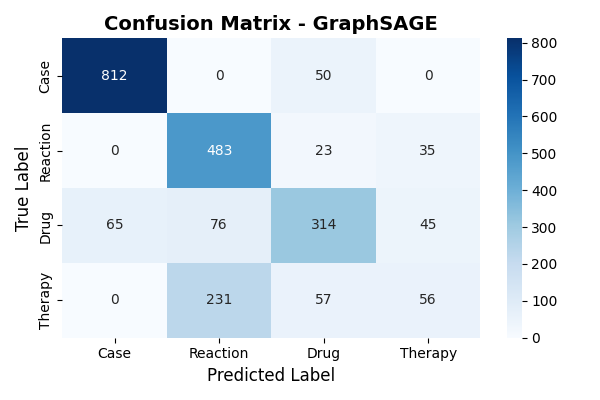}
        \caption{GraphSAGE}
        \label{fig:graphsage}
    \end{minipage}
    \hfill
    \begin{minipage}[b]{0.325\linewidth}
        \centering
        \includegraphics[width=\linewidth]{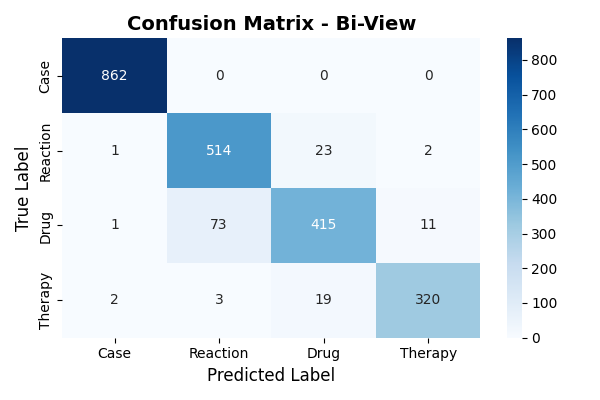}
        \caption{Bi-View}
        \label{fig:enhanced}
    \end{minipage}
\end{figure}

Bi-View performs better than both Node2Vec and GraphSAGE in every category, especially when it comes to differentiating between closely related classifications like Drug and Therapy, according to the confusion matrices shown in Figures \ref{fig:node2vec}, \ref{fig:graphsage} and \ref{fig:enhanced}.  Node2Vec performs well for the Case and Reaction classes, but it has trouble with a high rate of drug and reaction misclassification.  Though it still has significant confusion, particularly between Therapy and Reaction, GraphSAGE offers more balanced forecasts.  Bi-View, on the other hand, has the best classification accuracy and the fewest cross-category mistakes, demonstrating its exceptional capacity to gather and incorporate a variety of relational and structural data for better class separability.

\section{Conclusion and Future Work}
\label{sec:conclusion}
In this study, we proposed a hybrid graph embedding strategy that integrates unsupervised structural representations from Node2Vec with supervised neighbourhood aggregation from GraphSAGE through a dynamic fusion mechanism. This Bi-View Embedding Fusion strategy was specifically designed to address the limitations of traditional GML models in poor-dataset feature scenarios. By enriching initial node features with topological insights and centrality metrics and adaptively combining global and local representations, our approach produces more expressive and semantically informative embeddings. Experimental results on the FAERS healthcare dataset confirmed the effectiveness of the proposed method, showing improvements in classification tasks under low-features regimes.
These findings demonstrate that our fusion model is not merely a feature concatenation scheme but a transformative strategy that enhances generalisation by capturing diverse and complementary patterns within the graph.
As future work, we plan to evaluate the performance of this strategy across different types of graphs, including temporal and multi-relational datasets. Additionally, we intend to explore meta-learning and inductive transfer capabilities to further improve adaptability in dynamic or cross-domain graph environments.

\section*{Acknowledgments}
Rosario Napoli is a PhD student enrolled in the National PhD in Artificial Intelligence, XL cycle, course on Health and Life Sciences. Giovanni Lonia is a PhD student enrolled in the National PhD in Artificial Intelligence, XL cycle, course on AI for society, organised by the University of Pisa.
This work has been partially funded by the the Italian Ministry of Health, Piano Operativo Salute (POS) trajectory 4 “Biotechnology, bioinformatics and pharmaceutical development”, through the Pharma-HUB Project "Hub for the repositioning of drugs in rare diseases of the nervous system in children" (CUP J43C22000500006), the Italian Ministry of Health, Piano Operativo Salute (POS) trajectory 2 “eHealth, diagnostica avanzata, medical device e mini invasività” through the project ``Rete eHealth: AI e strumenti ICT Innovativi orientati alla Diagnostica Digitale (RAIDD)''(CUP J43C22000380001), the “SEcurity and RIghts in the CyberSpace (SERICS)” partnership (PE00000014), under the MUR National Recovery and Resilience Plan funded by the European Union – NextGenerationEU. In particular, it has been supported within the SERICS partnership through the projects FF4ALL (CUP D43C22003050001) and SOP (CUP H73C22000890001), the Horizon Europe NEUROKIT2E project (Grant Agreement 101112268).

\bibliographystyle{unsrt}
\bibliography{bib}
\end{document}